\documentclass[10pt]{article}

\usepackage[preprint]{tmlr}
\usepackage{microtype}
\usepackage{amsmath,amssymb,amsthm,mathtools,bm}
\usepackage{booktabs,array,tabularx,multirow,longtable}
\usepackage{graphicx}
\usepackage{float}
\usepackage[section]{placeins}
\usepackage{flafter}
\usepackage{xcolor}
\usepackage{enumitem}
\usepackage[hidelinks]{hyperref}
\usepackage[nameinlink,noabbrev]{cleveref}
\usepackage{tikz}
\usepackage{pgfplots}
\usepgfplotslibrary{groupplots}
\pgfplotsset{compat=1.18}

\definecolor{bargegreen}{HTML}{A5CF83}
\definecolor{bargeyellow}{HTML}{F0E76F}
\definecolor{bargegold}{HTML}{ECB65F}
\definecolor{bargeorange}{HTML}{E89951}
\colorlet{bargedarkgreen}{bargegreen!52!black}
\colorlet{bargedarkyellow}{bargeyellow!62!black}
\colorlet{bargedarkgold}{bargegold!72!black}
\colorlet{bargedarkorange}{bargeorange!68!black}
\colorlet{bargegrid}{bargegreen!24}
\setlist[itemize]{leftmargin=1.6em,itemsep=0.25em,topsep=0.4em}

\renewcommand{\headrulewidth}{0pt}
\fancypagestyle{arxivfirstpage}{%
  \fancyhf{}%
  \fancyfoot[L]{\small Preprint}%
  \fancyfoot[C]{\thepage}%
  \renewcommand{\headrulewidth}{0pt}%
  \renewcommand{\footrulewidth}{0pt}%
}

\newtheorem{proposition}{Proposition}

\newcommand{\R}{\mathbb{R}}
\newcommand{\Cset}{\{1,\ldots,C\}}
\newcommand{\one}{\mathbb{I}}
\newcommand{\softmax}{\operatorname{softmax}}
\newcommand{\sg}{\operatorname{sg}}
\newcommand{\BARGE}{\textsc{Barge}}
\newcommand{\best}[1]{\mathbf{#1}}
\newcommand{\second}[1]{\underline{#1}}
\newcommand{\yes}{\checkmark}

\title{Bounded Adjustment with Reliability-Guided Embedding\\ for Imbalanced Learning with Noisy Labels}
\author{%
\name Mushir Akhtar$^{1}$ \quad Akarsh J.$^{2}$ \quad M. Tanveer$^{1}$ \quad Mohd. Arshad$^{1}$ \\
\addr $^{1}$Department of Mathematics, Indian Institute of Technology Indore, India \\
\addr $^{2}$Department of Electrical Engineering, Indian Institute of Technology Indore, India}

\begin{document}
\raggedbottom
\maketitle
\thispagestyle{arxivfirstpage}

\begin{abstract}
Class-balanced learning and label noise create a coupled failure mode: frequency correction is needed to prevent majority classes from dominating the decision rule, but it can magnify the influence of incorrectly labeled minority examples. We introduce \BARGE{} (Bounded Adjustment with Reliability-Guided Embeddings), a single-stage objective that combines a bounded, prior-adjusted density-power score with reliability-guided angular geometry. The classification score is strictly proper in the adjusted probability space and recovers the balanced Bayes ordering under clean supervision and the true class prior. Under supplied-label contamination, its finite range bounds the change in classification risk at any fixed predictor, and its logit gradient redescends when the model confidently contradicts the supplied label. The adjusted target probability also provides a detached reliability weight for class-equal feature compactness, while a one-sided separation term discourages positively aligned class directions. The resulting objective requires neither a noise rate nor a transition matrix, uses one network, and leaves inference unchanged. We evaluate \BARGE{} on CIFAR-10, CIFAR-100, and Tiny ImageNet under long-tail and step imbalance, clean supervision, and 20\% and 40\% random incorrect-label replacement. Across 12 clean-label settings, \BARGE{} is competitive with strong imbalance-learning baselines, ranking second overall and attaining the lowest error in four settings. Under label corruption, it achieves the lowest mean balanced error in all six dataset--corruption settings and reduces the six-setting average from 72.32\% for the strongest competing method to 70.00\%. It also attains the highest macro-F1 and macro-AUPRC in every corrupted-label setting. Component ablations show that class-equal angular compactness improves on the bounded score alone. These findings identify bounded predictive influence and reliability-guided geometry as complementary mechanisms for learning from imbalanced data with uncertain labels.
\end{abstract}

\section{Introduction}
\label{sec:introduction}

Balanced classification asks a learner to treat every class as equally consequential even when the training set does not. This objective is common in medical screening, fault recognition, and ecological monitoring, where the event of interest is often the least frequent one. It is also a distinct statistical target. Empirical risk under the observed training prior favors head classes, whereas balanced error corresponds to a uniform target prior. The mismatch changes the Bayes decision rule and the dynamics by which class-specific features are learned \citep{he2009learning,buda2018systematic,francazi2023theoretical}. In applications with asymmetric consequences, long-tailed learning may additionally require an explicit decision utility rather than a single aggregate accuracy measure \citep{li2025reliable}.

Modern long-tail learning attacks this mismatch at several levels. Reweighting modifies the empirical contribution of each class \citep{cui2019class,ren2018reweight}; margin and logit methods reshape class decisions \citep{cao2019learning,menon2021longtail}; decoupled and contrastive methods strengthen representations before or alongside classifier balancing \citep{kang2020decoupling,zhu2022balanced}. Recent work has moved beyond heuristic corrections. Generalized logit-adjusted and class-aware losses establish balanced-risk consistency for broad hypothesis classes \citep{cortes2025improved}, while an imbalance-sensitive margin framework provides strong $\mathcal H$-consistency guarantees and class-sensitive generalization bounds \citep{cortes2025balancing}. Difficulty-aware margins refine which minority examples receive emphasis \citep{son2025difficulty}, and model rebalancing directly reallocates parameter capacity toward tail classes \citep{luo2025model}. These advances clarify how to correct class frequency. They leave a second question unresolved: what should happen when the emphasized label is wrong?

That question matters because imbalance and annotation error reinforce one another. Rare classes are often assembled from weak retrieval, limited expert review, or heterogeneous data sources. An inverse-frequency weight can then amplify an incorrect minority label. Cross-entropy has a different failure mode: its gradient does not vanish when the network becomes confident that the supplied label is wrong. Noisy-label learning addresses these effects through transition correction, sample selection, co-training, or robust losses \citep{patrini2017loss,han2018coteaching,zhang2018gce,ma2020normalized}. Recent methods jointly address long-tail imbalance and label noise through label correction, expert specialization, or multimodal consensus \citep{chen2025robustla,li2025ibc,li2026care}. This motivates a complementary loss-level question: can predictive and representation-level influence be controlled within a single-network training protocol, without estimating a noise rate or transition matrix?

To address this setting, we introduce \BARGE{} (Bounded Adjustment with Reliability-Guided Embeddings), a single-stage objective that combines a prior-adjusted density-power proper score with reliability-guided angular geometry. Under clean supervision, propriety recovers the training posterior in the adjusted probability space, while the raw logits retain the balanced Bayes ordering after removal of the prior contribution. Under label corruption, boundedness and a redescending gradient limit the influence of a supplied label that strongly conflicts with the current evidence; these properties do not imply recovery of the clean posterior under arbitrary corruption. The adjusted target probability also serves as a detached reliability signal for representation learning. It modulates angular contraction toward the supplied class direction, while within-class normalization maintains class-equal geometric influence. A complementary one-sided penalty separates positively aligned classifier directions. Thus, the predictive and geometric components are coupled through a common measure of agreement between the model and the supplied label.


The paper makes the following key contributions.
\begin{itemize}
  \item We derive a prior-adjusted density-power objective that is strictly proper in the adjusted probability vector and, under clean supervision and the true prior, induces the balanced Bayes rule in the raw logits.
  \item We establish a finite per-sample range, a bounded classification-risk perturbation at a fixed predictor under label contamination, and a redescending logit gradient in the confident-contradiction limit. These are targeted influence properties, not a blanket guarantee under arbitrary label noise.
  \item We introduce reliability-guided, class-equal angular compactness together with one-sided classifier separation. Both geometric terms are bounded and avoid a quadratic sample-pair matrix.
  \item We evaluate the complete objective under clean and corrupted supervision using a common architecture, optimization procedure, data partitions, tuning protocol, and metric suite. The study separates clean balanced classification, scaling to 200 classes, label-corruption behavior, calibration, computational cost, and the contributions of the score and geometric terms.
\end{itemize}

The remainder of the paper is structured as follows. \Cref{sec:related} reviews related work; \cref{sec:problem} formalizes the problem; \cref{sec:method,sec:theory} present \BARGE{} and its theoretical properties; \cref{sec:experiments} reports the empirical evaluation; and \cref{sec:discussion,sec:limitations,sec:conclusion} provide the discussion, limitations, and conclusions.

\section{Related Work}
\label{sec:related}

\paragraph{Balanced classification.}
Sampling and loss weighting are the most direct corrections for skewed class frequencies. Class-balanced loss replaces raw frequency with an effective-number estimate \citep{cui2019class}, meta-reweighting learns example contributions from validation data \citep{ren2018reweight}, and Balanced Meta-Softmax embeds the class prior in the softmax formulation \citep{ren2020balanced}. Focal loss was introduced for foreground--background imbalance and downweights easy predictions rather than classes themselves \citep{lin2017focal}. Equalization losses and related gradient controls suppress head-class domination in detection \citep{tan2020equalization}. Margin-based methods such as LDAM enlarge minority margins \citep{cao2019learning}, while logit adjustment directly corrects the decision rule under prior shift \citep{menon2021longtail}. Cortes et al.\ \citep{cortes2025improved} generalize logit-adjusted and class-aware losses and prove balanced-risk consistency under explicit conditions. A complementary margin framework establishes strong $\mathcal H$-consistency and class-sensitive generalization guarantees \citep{cortes2025balancing}. A recent high-dimensional analysis further shows that the effect of reweighting on learned features depends on the symmetry and geometry of the problem \citep{obuchi2025reweighting}. Recent alternatives also intervene outside the loss: distribution-matched synthesis augments minority classes \citep{li2025synthesis}, while model rebalancing redistributes parameter capacity \citep{luo2025model}. \BARGE{} shares the concern with the population target, but adds bounded sample influence and feature geometry.

\paragraph{Long-tail representations.}
Class imbalance affects the learned representation as well as the final classifier. Decoupled training shows that a strong representation can coexist with a biased classifier and can often be rebalanced in a second stage \citep{kang2020decoupling}. Balanced contrastive learning modifies class complements and prototypes to improve minority structure \citep{zhu2022balanced}. Balanced information bottlenecks combine loss rebalancing and self-distillation across multiple feature levels to preserve label-relevant information under long-tailed sampling \citep{lan2025mbib}. Center loss and angular-margin objectives likewise demonstrate the value of compact within-class features and separated class directions \citep{wen2016center,liu2017sphereface,wang2018cosface,deng2019arcface}. Supervised contrastive learning provides a general pairwise route to discriminative geometry \citep{khosla2020supervised}. \BARGE{} uses the final classifier directions as class anchors. This avoids sample-pair storage and allows the geometry assigned to an observed label to be moderated by its current reliability.

\paragraph{Learning from corrupted labels.}
Noise-transition correction is statistically appealing when the transition matrix can be identified \citep{patrini2017loss}. Co-teaching instead relies on two networks exchanging small-loss examples \citep{han2018coteaching}; peer loss constructs comparisons without knowing the noise rates \citep{liu2020peer}. Robust-loss analyses connect symmetry, boundedness, and normalization to tolerance under specified noise models \citep{ghosh2017robust,zhang2018gce,ma2020normalized}. Later work emphasizes that robustness and reliable probability estimation are not the same property \citep{olmin2022robustness}. Representation-based analyses provide another perspective: self-supervised features can distribute label noise more uniformly and preserve compact class structure before noisy-label fitting \citep{xu2025uniform}. Recent long-tailed noisy-label pipelines combine label correction with prior estimation, semantic expert training, or multimodal consensus \citep{chen2025robustla,li2025ibc,li2026care}. \BARGE{} takes a complementary objective-level approach: it does not estimate clean labels or a transition model, and uses adjusted confidence only as a stopped-gradient relative weight.

\paragraph{Positioning.}
\BARGE{} builds on four principles: prior correction for balanced decision rules, density-power divergence for robust estimation \citep{basu1998density}, proper scoring rules for probabilistic prediction \citep{gneiting2007proper}, and angular regularization for representation learning. The same prior-adjusted target probability defines both the bounded classification score and the detached reliability weight used in the compactness term. Consequently, predictive influence and label-conditioned feature contraction are governed by a common measure of agreement between the model and the supplied label. A complementary one-sided penalty discourages alignment between class directions. This coupling distinguishes \BARGE{} from approaches that apply prior correction, robust scoring, or feature regularization independently.

\section{Problem Setting and Design Requirements}
\label{sec:problem}

Let \(\mathcal{D}=\{(\bm{x}_i,\widetilde y_i)\}_{i=1}^{N}\) be a training sample, where \(\bm{x}_i\in\mathcal{X}\subseteq\R^{d_x}\) and the observed label \(\widetilde y_i\in\Cset\). Its unobserved clean counterpart is \(y_i\). The observed class count and empirical prior are
\begin{equation}
  n_c=\sum_{i=1}^{N}\one[\widetilde y_i=c],
  \qquad
  \widehat\pi_c=\frac{n_c}{N},
  \qquad
  \widehat{\bm\pi}\in\Delta^{C-1}.
  \label{eq:empirical-prior}
\end{equation}
We assume that every observed class is represented in the training distribution, so \(\widehat\pi_c>0\) for all \(c\in\Cset\).

Evaluation gives equal weight to the \(C\) clean classes. For a classifier \(g:\mathcal{X}\rightarrow\Cset\), the balanced risk is
\begin{equation}
  \mathcal{R}_{\mathrm{bal}}(g)
  =\frac{1}{C}\sum_{c=1}^{C}
  \Pr\!\left(g(X)\neq c\mid Y=c\right).
  \label{eq:balanced-risk}
\end{equation}
Write \(q_c(\bm{x})=\Pr(Y=c\mid X=\bm{x})\) and \(\pi_c=\Pr(Y=c)\) under the clean training distribution. For later use, let \(\widetilde q_c(\bm{x})=\Pr(\widetilde Y=c\mid X=\bm{x})\) and \(\widetilde\pi_c=\Pr(\widetilde Y=c)\) denote their observed-label counterparts. If the clean class-conditionals \(p(\bm{x}\mid y=c)\) are shared between training and evaluation and the target prior is uniform, the balanced Bayes rule is
\begin{equation}
  g^*_{\mathrm{bal}}(\bm{x})
  \in\arg\max_{c\in\Cset}\frac{q_c(\bm{x})}{\pi_c}.
  \label{eq:balanced-bayes}
\end{equation}

The network contains a feature map and a linear classifier,
\begin{equation}
  \bm{h}_i=f_{\theta}(\bm{x}_i)\in\R^d,
  \qquad
  \bm{z}_i=W\bm{h}_i+\bm b\in\R^C,
  \label{eq:model}
\end{equation}
where \(W=[\bm w_1^\top;\ldots;\bm w_C^\top]\in\R^{C\times d}\) and \(\bm b\in\R^C\). We seek raw logits whose ordering approaches \cref{eq:balanced-bayes}. At the same time, one low-probability observed label should neither create an unbounded sample loss nor pull its feature toward a class direction as strongly as a consistent label. The noise rate and transition matrix are unknown.

These constraints lead to three design requirements. Under clean supervision, the predictive term must have the correct balanced population target. For any supplied label, its response to confident contradiction must be bounded and eventually diminish. Finally, the feature regularizer must give classes equal influence while allowing examples within a class to contribute according to current evidence.

\section{Bounded Adjustment with Reliability-Guided Embeddings}
\label{sec:method}

\subsection{Overview}

The construction follows a single principle: class frequency determines the desired decision rule, while the model's agreement with a supplied label determines how strongly that label may reshape the representation. For a minibatch \(\mathcal B\) of size \(B\), \BARGE{} combines a prior-adjusted classification score, class-equal compactness, and class-direction separation:
\begin{equation}
  \mathcal L_{\mathrm{BARGE}}
  = \mathcal L_{\mathrm{cls}}
  + \eta\bigl(\mathcal L_{\mathrm{comp}}+\mathcal L_{\mathrm{sep}}\bigr),
  \qquad \eta > 0.
  \label{eq:barge}
\end{equation}
The coefficient \(\eta\) controls the overall strength of the geometric terms, while the class-count-dependent exponent defined in \cref{sec:density-power-score} adapts the classification response analytically to the number of classes. Classification supplies the statistical target, compactness organizes examples around their class directions, and separation prevents those directions from becoming indistinguishable. \Cref{fig:mechanism} summarizes these roles.

\subsection{Prior-adjusted probability}

Class prevalence and class evidence play different roles. Adding the empirical log prior during training maps the raw logits to probabilities on the observed training distribution:
\begin{equation}
  r_{ic}
  =\frac{\widehat\pi_c\exp(z_{ic})}
  {\sum_{k=1}^{C}\widehat\pi_k\exp(z_{ik})},
  \qquad
  \bm r_i=\softmax(\bm z_i+\log\widehat{\bm\pi})
  \in\Delta^{C-1}.
  \label{eq:adjusted-probability}
\end{equation}
Prediction uses \(\arg\max_c z_{ic}\) without the adjustment. Under clean supervision, a proper score can therefore identify the training posterior in \(\bm r\), while the raw logits remove the prior contribution required for balanced decisions. Under corrupted supervision, \(\bm r\) targets the observed-label posterior instead; the distinction is made explicit in \cref{sec:theory}.

\subsection{Bounded density-power score}\label{sec:density-power-score}

For \(\bm r\in\Delta^{C-1}\), supplied label \(y\in\Cset\), and exponent \(\beta>0\), define
\begin{equation}
  \ell_{\beta}(\bm r,y)
  =\frac{1}{\beta}
  +\sum_{c=1}^{C}r_c^{1+\beta}
  -\frac{1+\beta}{\beta}r_y^{\beta}.
  \label{eq:classification-loss}
\end{equation}
The minibatch classification term is
\begin{equation}
  \mathcal L_{\mathrm{cls}}
  =\frac{1}{B}\sum_{i\in\mathcal B}
  \ell_{\beta_C}(\bm r_i,\widetilde y_i),
  \qquad
  \beta_C=\min\!\left\{\frac12,\frac{1}{\log C}\right\}.
  \label{eq:classification-batch}
\end{equation}
The density-power score is proper, like cross-entropy, but has finite range and a gradient that vanishes under confident contradiction. The exponent controls how quickly this attenuation begins. Its decrease with \(C\) avoids suppressing a small target probability too aggressively when probability mass is distributed over more classes.

\subsection{Reliability-guided class-equal compactness}

The classification score controls the logits but does not directly organize the feature space. For nonzero features and classifier rows, we use each normalized classifier row as a class direction and normalize the feature,
\begin{equation}
  \widehat{\bm h}_i=\frac{\bm h_i}{\lVert\bm h_i\rVert_2}\in\R^d,
  \qquad
  \widehat{\bm w}_c=\frac{\bm w_c}{\lVert\bm w_c\rVert_2}\in\R^d,
  \label{eq:normalization}
\end{equation}
for nonzero vectors. The adjusted probability assigned to the observed label supplies the reliability weight
\begin{equation}
  \omega_i=\sg\!\left(r_{i\widetilde y_i}^{\beta_C}\right)\in(0,1].
  \label{eq:reliability}
\end{equation}
The stop-gradient operator makes the weight a measure of current agreement rather than an additional target for the probability vector. A supplied label that agrees with the adjusted prediction may shape the embedding strongly; a confidently contradicted label receives little relative influence.

Let \(\mathcal B_c=\{i\in\mathcal B:\widetilde y_i=c\}\) and \(\mathcal C_{\mathcal B}=\{c:|\mathcal B_c|>0\}\). The compactness term is
\begin{equation}
  \mathcal L_{\mathrm{comp}}
  =\frac{1}{|\mathcal C_{\mathcal B}|}
  \sum_{c\in\mathcal C_{\mathcal B}}
  \frac{\sum_{i\in\mathcal B_c}\omega_i
  \bigl(1-\widehat{\bm h}_i^\top\widehat{\bm w}_c\bigr)}
  {\sum_{i\in\mathcal B_c}\omega_i}.
  \label{eq:compactness}
\end{equation}
Every class present in the minibatch contributes once, independent of its sample count. Within that class, a label that agrees with the adjusted prediction receives more influence than one that does not. The classwise denominator is essential: a global average would allow head classes to dominate again, whereas classwise self-normalization preserves equal class contribution.

\subsection{One-sided classifier separation}

Compactness alone can pull different classes toward nearby classifier directions. We therefore penalize acute off-diagonal angles in the normalized classifier Gram matrix:
\begin{equation}
  \mathcal L_{\mathrm{sep}}
  =\frac{1}{C(C-1)}
  \sum_{\substack{c,k=1\\c\neq k}}^{C}
  \left[\max\!\left(0,
  \widehat{\bm w}_c^\top\widehat{\bm w}_k\right)\right]^2.
  \label{eq:separation}
\end{equation}
The penalty acts only when two class directions have positive cosine similarity. Orthogonal and obtuse directions are left unchanged. This one-sided form creates separation where overlap is most plausible without imposing an infeasible orthogonality requirement when \(C>d\).

\begin{figure}[!htbp]
\centering
\begin{tikzpicture}
\pgfmathdeclarefunction{sigmoid}{1}{\pgfmathparse{1/(1+exp(-#1))}}
\begin{groupplot}[
  group style={group size=2 by 2,horizontal sep=1.1cm,vertical sep=2.0cm},
  width=0.455\textwidth,
  height=0.285\textwidth,
  grid=major,
  grid style={bargegrid},
  tick label style={font=\footnotesize},
  label style={font=\small},
  title style={font=\small},
  legend style={font=\scriptsize,draw=none,fill=none,cells={anchor=west}},
  every axis plot/.append style={line width=1.25pt}
]
\nextgroupplot[
  title={(a) Bounded predictive penalty},
  xlabel={Supplied-label probability $r_y$},
  ylabel={Per-sample loss},
  xmin=0.005,xmax=0.995,ymin=0,ymax=5.5,
  legend pos=north east
]
\addplot[bargedarkgreen,domain=0.005:0.995,samples=240] {-ln(x)};
\addlegendentry{Cross-entropy}
\addplot[bargedarkorange,domain=0.005:0.995,samples=240]
  {1/0.3 + x^1.3 + (1-x)^1.3 - (1.3/0.3)*x^0.3};
\addlegendentry{Density-power, $\beta=0.3$}

\nextgroupplot[
  title={(b) Redescending target response},
  xlabel={Binary margin $u_y-u_m$},
  xmin=-12,xmax=12,ymin=0,ymax=1.05,
  legend pos=north east
]
\addplot[bargedarkgreen,domain=-12:12,samples=260] {1-sigmoid(x)};
\addlegendentry{Cross-entropy}
\addplot[bargedarkorange,domain=-12:12,samples=260]
  {1.3*abs(sigmoid(x)^1.3*(1-sigmoid(x))
  -sigmoid(x)*(1-sigmoid(x))^1.3
  -sigmoid(x)^0.3*(1-sigmoid(x)))};
\addlegendentry{Density-power, $\beta=0.3$}

\nextgroupplot[
  title={(c) Reliability across class counts},
  xlabel={Adjusted target probability $r_y$},
  ylabel={Reliability weight $r_y^{\beta_C}$},
  xmin=0,xmax=1,ymin=0,ymax=1.02,
  legend pos=south east
]
\addplot[bargedarkgreen,domain=0.001:1,samples=240] {x^0.4343};
\addlegendentry{$C=10$}
\addplot[bargedarkyellow,domain=0.001:1,samples=240,dashed] {x^0.2171};
\addlegendentry{$C=100$}
\addplot[bargedarkorange,domain=0.001:1,samples=240,dashdotted] {x^0.1887};
\addlegendentry{$C=200$}

\nextgroupplot[
  title={(d) Angular geometry},
  xlabel={Angle (degrees)},
  xmin=0,xmax=180,ymin=0,ymax=2.05,
  xtick={0,45,90,135,180},
  legend pos=north west
]
\addplot[bargedarkgreen,domain=0:180,samples=240] {1-cos(x)};
\addlegendentry{Compactness $1-\cos\theta$}
\addplot[bargedarkorange,domain=0:180,samples=240,dashed] {max(0,cos(x))^2};
\addlegendentry{Separation $[\cos\phi]_+^2$}
\end{groupplot}
\end{tikzpicture}
\caption{The four mechanisms in \BARGE{}. The density-power score remains finite as the supplied-label probability approaches zero (a), and its target response vanishes under confident contradiction (b). The fixed exponent transfers adjusted confidence into a smooth reliability weight (c). Feature compactness acts over the full angular range, whereas class-direction separation acts only on acute angles (d). (a) and (b) use a binary probability slice for visualization; the propositions hold for any \(C\).}
\label{fig:mechanism}
\end{figure}

\begin{table}[H]
\centering
\caption{Properties of representative loss functions. ``Proper'' means strictly proper in the probability vector optimized by the classification score. Boundedness and redescending behavior for GCA apply to its generalized-cross-entropy form with \(q>0\); at \(q=0\), it reduces to an unbounded weighted cross-entropy form. Feature compactness and class-direction separation refer to explicit terms in the loss.}
\label{tab:properties}
\setlength{\tabcolsep}{3.8pt}
\small
\begin{tabular}{lcccccc}
\toprule
Method & Frequency & Proper & Bounded & Redescending & Feature & Direction \\
       & correction & score & sample score & contradiction & compactness & separation \\
\midrule
CE             & --   & \yes & --   & --   & --   & -- \\
Focal             & --   & --   & --   & --   & --   & -- \\
CB                & \yes & --   & --   & --   & --   & -- \\
LDAM           & \yes & --   & --   & --   & --   & -- \\
LA           & \yes & \yes & --   & --   & --   & -- \\
GCA          & \yes & --   & \yes & \yes & --   & -- \\
\BARGE{} (ours)                               & \yes & \yes & \yes & \yes & \yes & \yes \\
\bottomrule
\end{tabular}
\end{table}

\subsection{Relation to representative losses}
\Cref{tab:properties} compares the properties of \BARGE{} with those of representative classification objectives.

\subsection{Complexity}
After logits are available, classification costs \(O(BC)\), compactness costs \(O(Bd)\), and separation costs \(O(C^2d)\). Additional memory is \(O(BC+C^2)\). No \(B\times B\) sample-similarity matrix is formed. The loss adds no inference-time parameters or operations.


\section{Theoretical Analysis}
\label{sec:theory}

\begin{proposition}[Properness and clean balanced decision]
\label{prop:proper}
For any \(\beta>0\) and conditional label distribution \(\bm s\in\Delta^{C-1}\), the conditional risk \(\mathbb E_{L\sim\bm s}[\ell_\beta(\bm r,L)]\) is uniquely minimized at \(\bm r=\bm s\). Under clean supervision, the prior-shift assumptions in \cref{sec:problem}, and an exact clean prior \(\bm\pi\), a population minimizer satisfies
\begin{equation}
  z_c^*(\bm x)=\log q_c(\bm x)-\log\pi_c+\kappa(\bm x),
  \label{eq:optimal-logit}
\end{equation}
where \(\kappa(\bm x)\) is independent of \(c\). Thus \(\arg\max_c z_c^*(\bm x)\) equals the balanced Bayes rule in \cref{eq:balanced-bayes}.
\end{proposition}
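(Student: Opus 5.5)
The proof splits into two parts: a strict-propriety statement on the simplex, and a translation of that statement through the prior-adjusted softmax of \cref{eq:adjusted-probability}.

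\textbf{Part 1: properness.} Fix $\bm s\in\Delta^{C-1}$ and compute the conditional risk directly:
\begin{equation*}
  S(\bm r,\bm s)=\mathbb E_{L\sim\bm s}[\ell_\beta(\bm r,L)]
  =\frac1\beta+\sum_{c}r_c^{1+\beta}-\frac{1+\beta}{\beta}\sum_c s_c r_c^{\beta}.
\end{equation*}
Subtract the value at $\bm r=\bm s$, which is $\tfrac1\beta-\tfrac1\beta\sum_c s_c^{1+\beta}$. The excess risk is then a sum of per-coordinate terms
\begin{equation*}
  d(r_c,s_c)=r_c^{1+\beta}-\frac{1+\beta}{\beta}s_c r_c^{\beta}+\frac1\beta s_c^{1+\beta}.
\end{equation*}
This is the Basu et al.\ density-power divergence evaluated pointwise. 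I would show $d(r,s)\ge 0$ on $[0,1]^2$, with equality iff $r=s$.

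The cleanest route is to write $d$ as a Bregman divergence of the strictly convex function $\phi(t)=t^{1+\beta}/\beta$. Indeed,
\begin{equation*}
  \phi(s)-\phi(r)-\phi'(r)(s-r)
  =\frac{s^{1+\beta}}{\beta}-\frac{r^{1+\beta}}{\beta}-\frac{1+\beta}{\beta}r^\beta(s-r),
\end{equation*}
and this should equal $d(r,s)$; I expect a short algebraic check to confirm it. Strict convexity of $\phi$ on $[0,\infty)$ then gives nonnegativity, with equality only when $r=s$. Summing over $c$ gives the unique minimizer $\bm r=\bm s$. Note that no constraint to the simplex is even needed for the minimization.

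\textbf{Part 2: population minimizer and balanced decision.} Under clean supervision the conditional label distribution at $\bm x$ is $\bm q(\bm x)$. The population risk $\mathbb E_X S(\bm r(\bm x),\bm q(\bm x))$ is minimized pointwise, so a minimizer (over a sufficiently rich logit class) must satisfy $\bm r(\bm x)=\bm q(\bm x)$ almost everywhere. With the exact prior $\bm\pi$ in place of $\widehat{\bm\pi}$, this reads
\begin{equation*}
  \frac{\pi_c e^{z_c}}{\sum_k\pi_k e^{z_k}}=q_c(\bm x).
\end{equation*}
Taking logs gives $z_c=\log q_c-\log\pi_c+\log\sum_k\pi_ke^{z_k}$, and the last term does not depend on $c$. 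This is exactly \cref{eq:optimal-logit}. Since $\kappa$ is shared across classes, $\arg\max_c z_c^*=\arg\max_c q_c/\pi_c$, which is \cref{eq:balanced-bayes}.

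\textbf{Expected obstacles.} The main subtlety lies in Part 2 rather than in the algebra. When some $q_c(\bm x)=0$, the matching logit is formally $-\infty$, so the infimum is not attained by finite logits. I would handle this by assuming $q_c>0$, or by interpreting \cref{eq:optimal-logit} in the extended reals, where the argmax claim still holds. I would also need to state the realizability of arbitrary measurable logit functions, so that pointwise minimization is legitimate.
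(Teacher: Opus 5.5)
Your proposal is correct and follows essentially the same route as the paper: both arguments rest on the Bregman divergence of the separable, strictly convex potential $\Phi(\bm r)=\beta^{-1}\sum_c r_c^{1+\beta}$, and then invert the prior-adjusted softmax at $\bm r=\bm q$ to read off the logits up to a class-independent constant. The paper writes the per-sample loss as $\ell_\beta(\bm r,y)=D_\Phi(\bm e_y,\bm r)$ and takes expectations to get excess risk $D_\Phi(\bm s,\bm r)$, which is your coordinatewise sum of $d(r_c,s_c)$; your remarks on zero posteriors and on the richness of the logit class are extra care the paper omits, not a gap.
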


\begin{proof}
Let \(\Phi(\bm r)=\beta^{-1}\sum_c r_c^{1+\beta}\), which is strictly convex for \(\beta>0\). Its Bregman divergence satisfies
\begin{align}
  D_\Phi(\bm e_y,\bm r)
  &=\Phi(\bm e_y)-\Phi(\bm r)
  -\nabla\Phi(\bm r)^\top(\bm e_y-\bm r)\\
  &=\frac1\beta+\sum_c r_c^{1+\beta}
  -\frac{1+\beta}{\beta}r_y^\beta
  =\ell_\beta(\bm r,y).
\end{align}
Taking expectation under \(\bm s\) and subtracting the risk at \(\bm s\) gives
\begin{equation}
  \mathbb E_{\bm s}[\ell_\beta(\bm r,L)]
  -\mathbb E_{\bm s}[\ell_\beta(\bm s,L)]
  =D_\Phi(\bm s,\bm r)\geq0,
\end{equation}
with equality only at \(\bm r=\bm s\). Under clean supervision, \(\bm s=\bm q\). Hence \(\softmax(\bm z^*+\log\bm\pi)=\bm q\), which yields \cref{eq:optimal-logit} up to a class-independent additive constant. Taking an argmax completes the result.
\end{proof}

\begin{proposition}[Finite range and redescending contradiction]
\label{prop:robust}
For every \(\bm r\in\Delta^{C-1}\) and \(y\in\Cset\),
\begin{equation}
  0\leq\ell_\beta(\bm r,y)\leq\frac{1+\beta}{\beta}.
  \label{eq:loss-bound}
\end{equation}
Let \(\bm u=\bm z+\log\bm\pi\) and \(S_\beta(\bm r)=\sum_j r_j^{1+\beta}\). Then
\begin{equation}
  \frac{\partial\ell_\beta}{\partial u_k}
  =(1+\beta)\left[
  r_k\bigl(r_k^\beta-S_\beta(\bm r)+r_y^\beta\bigr)
  -\one[k=y]r_y^\beta
  \right].
  \label{eq:logit-gradient}
\end{equation}
If \(r_y\to0\) and \(r_m\to1\) for one \(m\neq y\), every component of \cref{eq:logit-gradient} converges to zero. The cross-entropy target gradient does not.

Moreover, let \(\bm s_\varepsilon=(1-\varepsilon)\bm s+\varepsilon\bm a\), where \(\bm a\in\Delta^{C-1}\) is any contaminating conditional label distribution. For every fixed \(\bm r\),
\begin{equation}
  \left|
  \mathbb E_{L\sim\bm s_\varepsilon}\ell_\beta(\bm r,L)
  -\mathbb E_{L\sim\bm s}\ell_\beta(\bm r,L)
  \right|
  \leq \varepsilon\frac{1+\beta}{\beta}.
  \label{eq:contamination-bound}
\end{equation}
\end{proposition}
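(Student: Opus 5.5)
The proof splits into three independent parts: the range bound, the gradient formula with its limit, and the contamination bound.

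\emph{Range.} The lower bound is immediate from \cref{prop:proper}: \(\ell_\beta(\bm r,y)=D_\Phi(\bm e_y,\bm r)\) is a Bregman divergence of the strictly convex \(\Phi\), hence nonnegative. For the upper bound, \(\sum_c r_c^{1+\beta}\le \sum_c r_c=1\) because \(r_c\in[0,1]\) and \(1+\beta>1\). The subtracted term \(\frac{1+\beta}{\beta}r_y^\beta\) is nonnegative. Together these give \(\ell_\beta\le \frac1\beta+1=\frac{1+\beta}{\beta}\).

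\emph{Gradient.} Since \(\bm r=\softmax(\bm u)\), we have \(\partial r_c/\partial u_k=r_c(\one[c=k]-r_k)\). Differentiating \(\sum_c r_c^{1+\beta}\) gives
\[
(1+\beta)\sum_c r_c^\beta\, r_c(\one[c=k]-r_k)=(1+\beta)\bigl(r_k^{1+\beta}-r_kS_\beta\bigr).
\]
Differentiating \(-\frac{1+\beta}{\beta}r_y^\beta\) gives
\[
-(1+\beta)r_y^{\beta-1}r_y(\one[k=y]-r_k)=-(1+\beta)r_y^\beta(\one[k=y]-r_k).
\]
Adding the two and factoring out \(r_k\) yields \cref{eq:logit-gradient}.

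\emph{Limit.} Take \(r_y\to0\) and \(r_m\to1\). Then \(S_\beta\to1\), \(r_y^\beta\to0\) (using \(\beta>0\)), and \(r_k\to\one[k=m]\).
\begin{itemize}
\item For \(k=m\), the bracket tends to \(1\cdot(1-1+0)-0=0\).
\item For \(k=y\), we get \(r_y(\cdot)-r_y^\beta\to0\), because the parenthesis is bounded by \(3\).
\item For every other \(k\), \(r_k\to0\) multiplies a bounded factor, so the term tends to \(0\).
\end{itemize}
This is the only step needing care: \(r_y^\beta\) vanishes only because \(\beta>0\), and the bracket at \(k=m\) cancels only in the limit. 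I expect no real obstacle here. For cross-entropy, \(\partial(-\log r_y)/\partial u_y=r_y-1\to-1\ne0\), which proves the contrast claim.

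\emph{Contamination.} The expected loss is linear in the label distribution, so the difference equals \(\varepsilon\bigl(\mathbb E_{\bm a}\ell_\beta(\bm r,L)-\mathbb E_{\bm s}\ell_\beta(\bm r,L)\bigr)\). Both expectations lie in \([0,\frac{1+\beta}{\beta}]\) by the range bound, so their difference is at most \(\frac{1+\beta}{\beta}\) in absolute value, which gives \cref{eq:contamination-bound}.
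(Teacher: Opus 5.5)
Your proof is correct and follows the paper's argument step for step: nonnegativity from the Bregman representation, the upper bound from \(\sum_c r_c^{1+\beta}\le 1\), the gradient via the softmax Jacobian followed by direct substitution in the limit, and the contamination bound from linearity of the risk in the label distribution. Your version is slightly more explicit than the paper's. For example, you write the difference as \(\varepsilon\bigl(\mathbb E_{\bm a}\ell_\beta-\mathbb E_{\bm s}\ell_\beta\bigr)\) and bound the risks under \(\bm a\) and \(\bm s\), which is what the paper's terser wording intends.
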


\begin{proof}
Nonnegativity follows from the Bregman representation. Since \(\sum_c r_c^{1+\beta}\leq1\) and \(r_y^\beta\geq0\), the upper bound is \(1/\beta+1\). Differentiating through \(\partial r_j/\partial u_k=r_j(\one[j=k]-r_k)\) gives \cref{eq:logit-gradient}. In the stated limit, \(S_\beta(\bm r)\to1\), \(r_y^\beta\to0\), and every non-dominant probability vanishes. Direct substitution gives zero for \(k=m\), \(k=y\), and all remaining coordinates. Finally, both conditional risks in \cref{eq:contamination-bound} lie in \([0,(1+\beta)/\beta]\), and their mixture difference is \(\varepsilon\) times their difference. This proves the bound.
\end{proof}

\begin{proposition}[Bounded geometry and objective]
\label{prop:geometry-bound}
For every nonempty minibatch,
\begin{equation}
  0\leq\mathcal L_{\mathrm{comp}}\leq2,
  \qquad
  0\leq\mathcal L_{\mathrm{sep}}\leq1,
\end{equation}
and therefore
\begin{equation}
  0\leq\mathcal L_{\mathrm{BARGE}}
  \leq\frac{1+\beta_C}{\beta_C}+3\eta.
  \label{eq:objective-bound}
\end{equation}
\end{proposition}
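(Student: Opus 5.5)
The bounds follow termwise, and the plan is to bound each component of \cref{eq:barge} separately and then add. For compactness, I first check that \cref{eq:compactness} is well defined. Each \(\omega_i=\sg(r_{i\widetilde y_i}^{\beta_C})\) is strictly positive, because \(\bm r_i\) is a softmax of finite logits and therefore has all entries in \((0,1]\). For every class \(c\in\mathcal C_{\mathcal B}\) the denominator \(\sum_{i\in\mathcal B_c}\omega_i\) is thus positive. The set \(\mathcal C_{\mathcal B}\) is nonempty whenever the minibatch is nonempty. By Cauchy--Schwarz on the unit vectors in \cref{eq:normalization}, \(\widehat{\bm h}_i^\top\widehat{\bm w}_c\in[-1,1]\), so each summand \(1-\widehat{\bm h}_i^\top\widehat{\bm w}_c\) lies in \([0,2]\). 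The inner ratio is a convex combination of numbers in \([0,2]\), with weights \(\omega_i/\sum_j\omega_j\). It therefore lies in \([0,2]\). The outer uniform average over \(\mathcal C_{\mathcal B}\) preserves this interval, so \(0\le\mathcal L_{\mathrm{comp}}\le2\).

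For separation, Cauchy--Schwarz again gives \(\widehat{\bm w}_c^\top\widehat{\bm w}_k\le1\). Hence \([\max(0,\widehat{\bm w}_c^\top\widehat{\bm w}_k)]^2\in[0,1]\) for every ordered pair \(c\neq k\). There are exactly \(C(C-1)\) such pairs, so \cref{eq:separation} is an average of numbers in \([0,1]\) and lies in \([0,1]\). Here I assume \(C\ge2\), which the normalization in \cref{eq:separation} already presupposes.

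For the full objective, \cref{prop:robust} with \(\beta=\beta_C>0\) gives \(0\le\ell_{\beta_C}(\bm r_i,\widetilde y_i)\le(1+\beta_C)/\beta_C\) for each sample. Averaging over the minibatch in \cref{eq:classification-batch} keeps \(\mathcal L_{\mathrm{cls}}\) in the same interval. Since \(\eta>0\), the geometric part \(\eta(\mathcal L_{\mathrm{comp}}+\mathcal L_{\mathrm{sep}})\) lies in \([0,3\eta]\). Adding the two parts yields \cref{eq:objective-bound}.

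None of the steps is a real obstacle. The only point needing care is well-definedness: the positivity of the reliability weights, and the nonzero-vector convention in \cref{eq:normalization}. I would state explicitly that both are assumed, as in the definitions, so that every ratio and normalization is meaningful.
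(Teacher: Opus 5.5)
Your proposal is correct and follows the paper's proof essentially step for step. Normalized inner products in $[-1,1]$ give compactness distances in $[0,2]$, which survive the positive within-class weighted average and the uniform average over present classes; squared positive cosines lie in $[0,1]$; and combining these with the finite range in Proposition~\ref{prop:robust} yields \eqref{eq:objective-bound}. Your explicit well-definedness checks (strictly positive reliability weights, nonempty $\mathcal C_{\mathcal B}$, $C\ge 2$, nonzero vectors) make precise what the paper leaves implicit, but they do not change the argument.
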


\begin{proof}
Normalized inner products lie in \([-1,1]\), so every compactness distance lies in \([0,2]\). Each class contribution in \cref{eq:compactness} is a positive weighted average of such distances. Averaging across present classes preserves the interval. Every squared positive cosine in \cref{eq:separation} lies in \([0,1]\). Combining these bounds with \cref{eq:loss-bound} proves \cref{eq:objective-bound}.
\end{proof}

\subsection{What changes under corrupted supervision}
\label{sec:corrupted-supervision}

Let \(T_{ab}(\bm x)=\Pr(\widetilde Y=b\mid Y=a,X=\bm x)\) denote a possibly instance-dependent corruption mechanism. The observed-label posterior and prior are
\begin{equation}
  \widetilde q_b(\bm x)
  =\sum_{a=1}^{C}T_{ab}(\bm x)q_a(\bm x),
  \qquad
  \widetilde\pi_b=\mathbb E_X[\widetilde q_b(X)].
  \label{eq:observed-posterior}
\end{equation}
Proposition~\ref{prop:proper} then identifies \(\widetilde{\bm q}\), not \(\bm q\), because the training risk is formed from \(\widetilde Y\). With an exact observed prior, the raw population logits order \(\widetilde q_c(\bm x)/\widetilde\pi_c\). This ratio is not generally the clean balanced rule \(q_c(\bm x)/\pi_c\). The balanced-consistency result is therefore a clean-supervision statement and is not used to claim consistency under label corruption.

The noise-relevant conclusions are influence statements. Equation~\eqref{eq:contamination-bound} limits how much an arbitrary fraction of replaced labels can change the classification risk at any fixed predictor. Proposition~\ref{prop:robust} also shows that the gradient of a supplied label vanishes when the adjusted prediction confidently contradicts it. For compactness, the normalized share of example \(i\) within its observed class is
\begin{equation}
  a_i=\frac{\omega_i}{\sum_{j\in\mathcal B_{\tilde y_i}}\omega_j}.
  \label{eq:relative-geometric-influence}
\end{equation}
If another example in that class has weight at least \(\delta>0\), then \(a_i\leq\omega_i/\delta\to0\) as \(r_{i\widetilde y_i}\to0\). Thus a contradicted label loses relative geometric influence when the minibatch contains supporting evidence. The condition is essential: a singleton class, or a class in which every supplied label is unreliable, receives no such guarantee. These results motivate the corruption experiments without asserting universal noise tolerance or recovery of the clean posterior.

\section{Experiments}
\label{sec:experiments}

The experiments are organized around three questions. First, does the bounded score and geometry preserve strong balanced classification when labels are clean? Second, does the method scale from 10 to 200 classes? Third, do the influence controls translate into better relative performance when the training labels are corrupted?

\subsection{Experimental protocol}

\paragraph{Datasets and imbalance.}
We use CIFAR-10, CIFAR-100 \citep{krizhevsky2009learning}, and Tiny ImageNet-200 \citep{le2015tinyimagenet}. A class-balanced validation set is held out before the CIFAR training set is made imbalanced: 50 images per class for CIFAR-10 and 5 per class for CIFAR-100. Tiny ImageNet uses its 100,000-image training split and labeled 10,000-image validation split. Five training images per class form a balanced 1,000-image model-selection set; the released validation images are reserved for testing.

For \(C\) classes and maximum count \(n_{\max}\), the requested training counts are
\begin{align}
  n_c^{\mathrm{LT}}
  &=\max\!\left\{1,
  \left\lfloor n_{\max}\rho^{-(c-1)/(C-1)}\right\rfloor\right\},
  \label{eq:lt-counts}\\
  n_c^{\mathrm{step}}
  &=\begin{cases}
  n_{\max}, & c\leq C/2,\\
  \max\!\left\{1,\left\lfloor n_{\max}/\rho\right\rfloor\right\}, & c>C/2,
  \end{cases}
  \label{eq:step-counts}
\end{align}
with classes ordered from head to tail. Here, \(\rho\) denotes the nominal imbalance ratio supplied to the class-count construction. The achieved ratio may differ because class counts are integer-valued and are lower-bounded by one example per class. The lower clamp therefore retains every class even when \(\rho>n_{\max}\). We evaluate both profiles at \(\rho\in\{100,1000\}\), and \cref{tab:achieved-imbalance} reports the resulting count extrema and achieved ratios.

The primary test view is constructed from each official test split using the corresponding profile, value of \(\rho\), and one-example lower clamp, following \citet{cortes2025improved}. The available per-class maxima are 1000 for CIFAR-10, 100 for CIFAR-100, and 50 for Tiny ImageNet. The achieved test ratios at \(\rho=100\) and \(1000\) are therefore \((100,1000)\), \((100,100)\), and \((50,50)\), respectively. Metrics on the complete balanced official test split are retained as a secondary view.

\begin{table}[tbp]
\centering
\caption{Clean training-set class counts and achieved imbalance after the balanced validation holdout. The minimum count and achieved ratio are the same for the long-tail and step profiles; their intermediate class counts differ.}
\label{tab:achieved-imbalance}
\setlength{\tabcolsep}{7pt}
\small
\begin{tabular}{lccccc}
\toprule
& & \multicolumn{2}{c}{$\rho=100$} & \multicolumn{2}{c}{$\rho=1000$} \\
\cmidrule(lr){3-4}\cmidrule(lr){5-6}
Dataset & $n_{\max}$ & $n_{\min}$ & Achieved ratio & $n_{\min}$ & Achieved ratio \\
\midrule
CIFAR-10 & 4950 & 49 & 101.02 & 4 & 1237.50 \\
CIFAR-100 & 495 & 4 & 123.75 & 1 & 495.00 \\
Tiny ImageNet & 495 & 4 & 123.75 & 1 & 495.00 \\
\bottomrule
\end{tabular}
\end{table}

\paragraph{Training-label corruption.}
Noise experiments use long-tail training data with \(\rho=100\). At corruption rate \(\varepsilon\in\{0.2,0.4\}\), each training label is independently replaced with probability \(\varepsilon\). The replacement is sampled uniformly from the \(C-1\) incorrect classes. Validation and test labels remain unchanged. This protocol models unstructured annotation error. It does not corrupt image features and does not model class-dependent confusions.

All prior-based losses use counts of the observed training labels. A pseudocount \(\alpha=1\) is added to every class count only when at least one observed class count is zero; this condition does not occur in any reported split. Across the common corrupted-label partitions, the achieved observed-label ratios after 20\% and 40\% replacement are 13.17 and 5.65 for CIFAR-10, 19.65 and 8.82 for CIFAR-100, and 26.25 and 9.85 for Tiny ImageNet. These ratios differ from the clean construction because random replacement redistributes labels toward low-count classes. Every applicable method uses the same observed counts and priors.

\paragraph{Architecture and optimization.}
Every method uses ResNet-32 \citep{he2016deep} with \(d=64\) features and classifier \(W\in\R^{C\times64}\). Training lasts 200 epochs with SGD, initial learning rate 0.2, momentum 0.9, Nesterov acceleration, weight decay \(10^{-3}\), and cosine annealing to zero. Training and evaluation batch sizes are 1024. CIFAR images receive random \(32\times32\) crops after four-pixel padding and random horizontal flips. Tiny ImageNet uses random \(64\times64\) crops after eight-pixel padding and random horizontal flips. Dataset-specific channel normalization is applied. Data ordering is deterministic, and every method is evaluated on identical training, validation, and test partitions. For each run, the epoch with the lowest validation balanced error determines the model evaluated on the test set.

\paragraph{Baselines and tuning.}
The comparison includes cross-entropy (CE), inverse-frequency weighted cross-entropy (WCE), focal loss \citep{lin2017focal}, class-balanced loss (CB) \citep{cui2019class}, LDAM \citep{cao2019learning}, logit adjustment (LA) \citep{menon2021longtail}, and generalized class-aware loss (GCA) \citep{cortes2025improved}. To preserve a controlled loss-level comparison, the benchmark is restricted to objectives that operate within the same single-network training protocol; methods that require label refurbishment, multiple experts, or external semantic signals are outside this scope. All objectives are evaluated from the same \(C\)-dimensional classifier outputs and observed labels, and methods that depend on class statistics use the same class-count vector. The geometric terms of \BARGE{} additionally use the \(d\)-dimensional feature vector preceding the final classifier and the classifier weight matrix in \(\mathbb R^{C\times d}\), as defined in \cref{eq:compactness,eq:separation}. GCA uses the true-class scaling specified by \citet{cortes2025improved}, with a single global factor that normalizes its class weights to have unit expectation under the training distribution.

Hyperparameters for every method are selected by validation balanced error. Baseline selection is performed for each dataset--profile--ratio setting. The validation grids are focal \(\gamma\in\{0,0.1,\ldots,1.0,1.5,2,\ldots,10\}\) on clean data and \(\{0,1,2,5\}\) with corruption; CB \(\beta\in\{0.1,\ldots,0.9,0.99,0.999,0.9999\}\) on clean data and \(\{0.9,0.99,0.999,0.9999\}\) with corruption; LDAM margin scale in \((\{1,5\}\times\{10^k:k=-4,\ldots,3\})\cup\{10^4\}\) on clean data and \(\{0.1,0.5,1,5\}\) with corruption; and GCA \(q\in\{0,0.1,\ldots,0.9\}\) on clean data and \(\{0,0.1,0.3,0.5,0.7,0.9\}\) with corruption. LA uses \(\tau=1\).

The only validation-selected coefficient in \BARGE{} is \(\eta\), which is evaluated using the same selection criterion as the baseline hyperparameters. It is selected from \(\{0.03,0.1,0.3,\allowbreak 0.5,1.0\}\) at dataset level using seeds \(\{1001,1002,\allowbreak 1003\}\). The selected values are 1.0 for CIFAR-10, 0.3 for CIFAR-100, and 0.1 for Tiny ImageNet. The Tiny ImageNet corruption study independently evaluates the same grid across both corruption rates and also selects 0.1. Equation~\eqref{eq:classification-batch} fixes \(\beta_C\) analytically at 0.4343, 0.2171, and 0.1887 for CIFAR-10, CIFAR-100, and Tiny ImageNet, respectively. Final comparisons use three seeds, \(\{42,1126,\allowbreak 2025\}\), for every method. All objective calculations use 32-bit floating-point arithmetic, and the feature and classifier normalization denominators in \BARGE{} are lower-bounded by \(10^{-8}\).

\paragraph{Metrics and comparisons.}
The primary metric is mean balanced error (MBE),
\begin{equation}
  \mathrm{MBE}=100\left(1-\frac1C\sum_{c=1}^{C}\mathrm{Recall}_c\right),
  \label{eq:mbe}
\end{equation}
where lower is better. The clean benchmark table reports total balanced error,
\begin{equation}
  \mathrm{TBE}
  = \sum_{c=1}^{C}\bigl(1-\mathrm{Recall}_c\bigr)
  = \frac{C}{100}\mathrm{MBE},
  \label{eq:tbe}
\end{equation}
which ranges from \(0\) to \(C\).
Secondary metrics are macro-F1, one-vs-rest macro-AUPRC, tail recall over the lowest-frequency third of classes, worst-class recall, negative log-likelihood (NLL), multiclass Brier score \citep{brier1950verification}, and 15-bin expected calibration error (ECE) \citep{guo2017calibration}. Dispersion is the standard deviation across the three seeds. Tiny ImageNet paired comparisons use two-sided Wilcoxon signed-rank tests \citep{wilcoxon1945individual} with Holm correction \citep{holm1979simple}. The clean comparison contains 12 matched setting--seed blocks; the noise comparison contains six.

\subsection{Clean imbalanced classification}

\Cref{tab:clean-tbe} gives a useful boundary on the claim. LA is the strongest clean-data baseline overall and wins seven of the 12 setting means. \BARGE{} ranks second on aggregate, wins four settings, appears in the top two in eight, and in the top three in ten. Its clearest clean gains occur in the CIFAR-10 long-tail settings: relative to LA, TBE falls from 2.87 to 2.69 at \(\rho=100\) and from 5.93 to 5.38 at \(\rho=1000\). It also achieves the lowest step-imbalance TBE at \(\rho=1000\) on CIFAR-100 and Tiny ImageNet. LA remains clearly better on several step-imbalanced settings. This pattern is informative. Prior correction is already a strong solution when labels are clean; bounded influence and geometric regularization are helpful in selected regimes, but they do not erase the advantage of a simpler prior-adjusted classifier everywhere.

Tiny ImageNet provides a second view because each setting contains 200 classes. Across its 12 paired setting--seed blocks, \BARGE{} has the best average TBE rank, 2.67, and the second-lowest pooled mean TBE, 146.44, behind LA at 145.90. It beats GCA and WCE in all 12 blocks, Focal in 10, LDAM in 9, CE in 8, CB in 7, and LA in 6. After Holm correction, the paired Wilcoxon comparisons are significant only against GCA and WCE (\(p=0.0034\) each). On the complete balanced test view, \BARGE{} also has the best average ranks for TBE (2.17) and macro-F1 (2.08). Other methods lead macro-AUPRC and tail recall, and every method has zero worst-class recall in at least one severe setting. Scaling to 200 classes therefore preserves the method's balanced-error consistency, but does not produce uniform dominance across all minority metrics.

\begin{table}[tbp]
\centering
\caption{Clean-test total balanced error (TBE; lower is better) under long-tail and step imbalance. Each entry reports the mean $\pm$ standard deviation over three seeds. Best values are shown in bold and second-best values are underlined.}
\label{tab:clean-tbe}
\scriptsize
\resizebox{\textwidth}{!}{%
\begin{tabular}{lcccccc}
\toprule
& \multicolumn{3}{c}{Long-tail imbalance} & \multicolumn{3}{c}{Step imbalance} \\
\cmidrule(lr){2-4}\cmidrule(lr){5-7}
Method & CIFAR-10 & CIFAR-100 & Tiny ImageNet & CIFAR-10 & CIFAR-100 & Tiny ImageNet \\
\midrule
\multicolumn{7}{c}{\textit{Imbalance parameter $\rho=100$}} \\
\midrule
CE    & $3.25\!\pm\!0.20$ & $64.87\!\pm\!0.45$ & $147.80\!\pm\!2.27$ & $3.69\!\pm\!0.11$ & $62.56\!\pm\!0.39$ & $141.39\!\pm\!0.46$ \\
WCE   & $3.16\!\pm\!0.05$ & $73.00\!\pm\!0.93$ & $156.45\!\pm\!2.69$ & $5.07\!\pm\!0.24$ & $77.87\!\pm\!4.09$ & $170.53\!\pm\!6.11$ \\
Focal & $3.35\!\pm\!0.21$ & $65.37\!\pm\!0.96$ & $147.94\!\pm\!0.43$ & $3.70\!\pm\!0.34$ & $62.99\!\pm\!0.52$ & $141.56\!\pm\!0.20$ \\
CB    & $3.18\!\pm\!0.34$ & $64.27\!\pm\!0.13$ & $148.27\!\pm\!1.69$ & $3.62\!\pm\!0.07$ & $\second{62.01\!\pm\!0.23}$ & $\best{140.47\!\pm\!0.61}$ \\
LDAM  & $2.89\!\pm\!0.07$ & $63.24\!\pm\!0.03$ & $150.56\!\pm\!1.24$ & $3.78\!\pm\!0.28$ & $62.76\!\pm\!0.32$ & $\second{140.53\!\pm\!0.72}$ \\
LA    & $\second{2.87\!\pm\!0.14}$ & $\best{61.93\!\pm\!0.20}$ & $\best{142.85\!\pm\!1.61}$ & $\best{2.77\!\pm\!0.17}$ & $\best{57.06\!\pm\!0.45}$ & $140.75\!\pm\!0.56$ \\
GCA   & $4.59\!\pm\!0.14$ & $94.98\!\pm\!4.06$ & $190.71\!\pm\!3.61$ & $5.37\!\pm\!0.07$ & $77.35\!\pm\!18.75$ & $198.99\!\pm\!0.02$ \\
\BARGE{} & $\best{2.69\!\pm\!0.03}$ & $\second{62.20\!\pm\!0.23}$ & $\second{145.86\!\pm\!1.90}$ & $\second{3.41\!\pm\!0.14}$ & $62.32\!\pm\!0.34$ & $141.02\!\pm\!1.17$ \\
\midrule
\multicolumn{7}{c}{\textit{Imbalance parameter $\rho=1000$}} \\
\midrule
CE    & $6.36\!\pm\!0.10$ & $74.19\!\pm\!0.56$ & $162.79\!\pm\!1.75$ & $5.29\!\pm\!0.04$ & $62.79\!\pm\!0.20$ & $\second{140.19\!\pm\!0.31}$ \\
WCE   & $7.32\!\pm\!0.38$ & $90.93\!\pm\!2.60$ & $190.86\!\pm\!1.80$ & $7.90\!\pm\!0.24$ & $94.61\!\pm\!1.21$ & $195.50\!\pm\!0.96$ \\
Focal & $6.46\!\pm\!0.07$ & $75.82\!\pm\!0.82$ & $160.77\!\pm\!0.73$ & $5.34\!\pm\!0.04$ & $62.94\!\pm\!0.50$ & $140.84\!\pm\!0.54$ \\
CB    & $6.39\!\pm\!0.09$ & $74.12\!\pm\!0.21$ & $159.57\!\pm\!0.52$ & $\second{5.12\!\pm\!0.20}$ & $\second{62.52\!\pm\!0.27}$ & $140.88\!\pm\!0.09$ \\
LDAM  & $6.35\!\pm\!0.11$ & $74.15\!\pm\!0.18$ & $\second{158.43\!\pm\!1.20}$ & $5.39\!\pm\!0.06$ & $62.86\!\pm\!0.29$ & $140.79\!\pm\!0.44$ \\
LA    & $\second{5.93\!\pm\!0.40}$ & $\best{72.56\!\pm\!0.45}$ & $\best{157.92\!\pm\!0.70}$ & $\best{4.64\!\pm\!0.15}$ & $62.59\!\pm\!0.50$ & $142.06\!\pm\!1.61$ \\
GCA   & $7.58\!\pm\!0.14$ & $96.82\!\pm\!3.68$ & $197.85\!\pm\!2.00$ & $5.33\!\pm\!0.03$ & $65.57\!\pm\!0.80$ & $190.85\!\pm\!13.59$ \\
\BARGE{}  & $\best{5.38\!\pm\!0.32}$ & $\second{72.74\!\pm\!0.11}$ & $159.24\!\pm\!5.46$ & $5.23\!\pm\!0.05$ & $\best{62.26\!\pm\!0.29}$ & $\best{139.63\!\pm\!0.75}$ \\
\bottomrule
\end{tabular}%
}
\end{table}

\subsection{Long-tailed classification with label noise}

As summarized in \cref{tab:noise-mbe}, \BARGE{} achieves the lowest mean balanced error in every one of the six dataset--corruption settings, improving the six-setting average from 72.32\% (LA) to 70.00\%.

The largest margin is on CIFAR-10 with 20\% corruption, where MBE decreases by 5.21 points relative to LA and by 9.35 points relative to CB. The margin narrows to 0.25 points at 40\% corruption on CIFAR-10, consistent with the two methods converging as noise dominates the signal available to any prior-based correction. The CIFAR-100 margins are 0.45 points over LDAM at 20\% corruption and 2.01 points over CE at 40\%. On Tiny ImageNet, \BARGE{} improves over the strongest competing mean by 0.51 points at 20\% corruption and 1.74 points at 40\%. \BARGE{} ranks first in all six settings; a paired Wilcoxon test confirms this ordering is not due to chance in the uncorrected comparisons ($p = 0.0313$ per baseline), though with only six matched blocks, Holm-corrected significance is a high bar to clear.

\begin{table}[tbp]
\centering
\caption{Test mean balanced error (MBE, \%; lower is better) under long-tail imbalance with $\rho=100$ and random incorrect-label replacement. Each entry reports the mean $\pm$ standard deviation over three seeds. The final column averages the six dataset--corruption setting means.}
\label{tab:noise-mbe}
\setlength{\tabcolsep}{2.8pt}
\scriptsize
\resizebox{\textwidth}{!}{%
\begin{tabular}{lccccccc}
\toprule
& \multicolumn{2}{c}{CIFAR-10} & \multicolumn{2}{c}{CIFAR-100} & \multicolumn{2}{c}{Tiny ImageNet} & \\
\cmidrule(lr){2-3}\cmidrule(lr){4-5}\cmidrule(lr){6-7}
Method & 20\% & 40\% & 20\% & 40\% & 20\% & 40\% & Average \\
\midrule
CE    & $50.63\!\pm\!2.15$ & $61.16\!\pm\!1.18$ & $76.37\!\pm\!0.71$ & $\second{84.27\!\pm\!0.81}$ & $82.67\!\pm\!1.26$ & $\second{86.57\!\pm\!1.36}$ & 73.61 \\
WCE   & $56.52\!\pm\!1.31$ & $59.94\!\pm\!2.62$ & $82.54\!\pm\!0.98$ & $87.38\!\pm\!1.39$ & $86.66\!\pm\!1.14$ & $89.66\!\pm\!1.29$ & 77.12 \\
Focal & $51.97\!\pm\!0.64$ & $62.61\!\pm\!3.29$ & $77.33\!\pm\!0.79$ & $84.98\!\pm\!0.60$ & $82.38\!\pm\!2.04$ & $87.49\!\pm\!0.50$ & 74.46 \\
CB    & $49.80\!\pm\!0.49$ & $62.42\!\pm\!0.90$ & $76.20\!\pm\!0.18$ & $85.30\!\pm\!1.15$ & $82.57\!\pm\!0.38$ & $86.74\!\pm\!0.38$ & 73.84 \\
LDAM  & $50.34\!\pm\!0.67$ & $61.77\!\pm\!0.47$ & $\second{75.94\!\pm\!0.67}$ & $84.41\!\pm\!1.44$ & $\second{81.79\!\pm\!0.72}$ & $87.15\!\pm\!0.36$ & 73.57 \\
LA    & $\second{45.66\!\pm\!2.07}$ & $\second{55.97\!\pm\!0.77}$ & $76.67\!\pm\!0.31$ & $84.50\!\pm\!1.23$ & $83.48\!\pm\!1.11$ & $87.65\!\pm\!0.83$ & $\second{72.32}$ \\
GCA   & $50.70\!\pm\!5.01$ & $56.92\!\pm\!1.80$ & $90.06\!\pm\!0.62$ & $93.70\!\pm\!3.24$ & $99.22\!\pm\!0.39$ & $99.33\!\pm\!0.29$ & 81.66 \\
\BARGE{} & $\best{40.45\!\pm\!0.52}$ & $\best{55.72\!\pm\!2.15}$ & $\best{75.49\!\pm\!0.96}$ & $\best{82.26\!\pm\!0.88}$ & $\best{81.28\!\pm\!1.11}$ & $\best{84.83\!\pm\!0.47}$ & $\best{70.00}$ \\
\bottomrule
\end{tabular}%
}
\end{table}

To show how performance changes as corruption increases, \cref{fig:noise-trajectory} connects the clean and corrupted-label results for CE, LA, GCA, and \BARGE{}.

\begin{figure}[tbp]
\centering
\begin{tikzpicture}
\begin{groupplot}[
  group style={group size=3 by 1,horizontal sep=0.55cm},
  width=0.305\textwidth,
  height=0.245\textwidth,
  grid=major,
  grid style={bargegrid},
  tick label style={font=\scriptsize},
  label style={font=\footnotesize},
  title style={font=\footnotesize},
  legend style={font=\scriptsize,draw=none,fill=none},
  every axis plot/.append style={mark size=2.2pt,line width=1.15pt},
  xmin=-2,xmax=42,xtick={0,20,40}
]
\nextgroupplot[
  title={(a) CIFAR-10},
  ylabel={Mean balanced error (\%)},
  ymin=20,ymax=68,
  legend to name=noiselegend,
  legend columns=4,
  legend style={/tikz/every even column/.append style={column sep=1.2em}}
]
\addplot[bargedarkgreen,mark=square*,densely dotted] coordinates {(0,32.51) (20,50.63) (40,61.16)};
\addlegendentry{CE}
\addplot[bargedarkgold,mark=triangle*,dashed] coordinates {(0,28.65) (20,45.66) (40,55.97)};
\addlegendentry{LA}
\addplot[bargedarkyellow,mark=diamond*,dashdotted] coordinates {(0,45.89) (20,50.70) (40,56.92)};
\addlegendentry{GCA}
\addplot[bargedarkorange,mark=*,line width=2pt] coordinates {(0,26.92) (20,40.45) (40,55.72)};
\addlegendentry{\BARGE{}}

\nextgroupplot[
  title={(b) CIFAR-100},
  ymin=58,ymax=100
]
\addplot[bargedarkgreen,mark=square*,densely dotted] coordinates {(0,64.87) (20,76.37) (40,84.27)};
\addplot[bargedarkgold,mark=triangle*,dashed] coordinates {(0,61.93) (20,76.67) (40,84.50)};
\addplot[bargedarkyellow,mark=diamond*,dashdotted] coordinates {(0,94.98) (20,90.06) (40,93.70)};
\addplot[bargedarkorange,mark=*,line width=2pt] coordinates {(0,62.20) (20,75.49) (40,82.26)};

\nextgroupplot[
  title={(c) Tiny ImageNet},
  ymin=68,ymax=100
]
\addplot[bargedarkgreen,mark=square*,densely dotted] coordinates {(0,73.90) (20,82.67) (40,86.57)};
\addplot[bargedarkgold,mark=triangle*,dashed] coordinates {(0,71.43) (20,83.48) (40,87.65)};
\addplot[bargedarkyellow,mark=diamond*,dashdotted] coordinates {(0,95.36) (20,99.22) (40,99.33)};
\addplot[bargedarkorange,mark=*,line width=2pt] coordinates {(0,72.93) (20,81.28) (40,84.83)};
\end{groupplot}
\path (group c1r1.south west) --
  node[below=22pt] {Training labels replaced (\%)}
  (group c3r1.south east);
\path (group c1r1.north west) --
  node[above=25pt] {\ref{noiselegend}}
  (group c3r1.north east);
\end{tikzpicture}
\caption{Test mean balanced error under long-tail imbalance with $\rho=100$ as the proportion of randomly replaced training labels increases. Each point reports the mean over three seeds, and lines connect the evaluated corruption levels.}

\label{fig:noise-trajectory}
\end{figure}

\Cref{fig:noise-trajectory} makes the clean-to-noisy transition visible. On CIFAR-10, the gap between \BARGE{} and CE grows from 5.59 points on clean data to 10.18 points at 20\% corruption, then narrows at 40\%. On CIFAR-100, \BARGE{} begins 0.27 points behind LA, overtakes it at 20\%, and leads it by 2.24 points at 40\%. Tiny ImageNet repeats the crossover: \BARGE{} starts 1.50 points behind LA, then leads by 2.20 points at 20\% corruption and 2.82 points at 40\%. This clean-to-noisy crossover provides empirical evidence consistent with the bounded-contamination and redescending-influence properties; the accompanying theory characterizes influence under contamination rather than recovery of the clean balanced rule.

\subsection{Minority recognition and probability quality}

MBE alone does not show whether an improvement is distributed across minority classes or concentrated in a few easy ones. \Cref{tab:noise-discrimination} therefore reports macro and tail metrics for every corruption setting.

The CIFAR-10 gains are not confined to the average class. At 20\% corruption, \BARGE{} raises macro-F1 by 5.81 points over LA, macro-AUPRC by 5.87 points, tail recall by 9.86 points over WCE, and worst-class recall by 15.87 points. At 40\%, it remains first on all four metrics, although the macro-F1 and macro-AUPRC margins over GCA are 1.56 and 0.62 points. The CIFAR-100 view is harsher. \BARGE{} still leads macro-F1, macro-AUPRC, and tail recall at both corruption levels, but LA alone has nonzero worst-class recall at 20\%, and no method recovers every class at 40\%.

Tiny ImageNet separates average discrimination from extreme-tail recovery. \BARGE{} leads macro-F1 and macro-AUPRC at both corruption levels, including margins of 1.63 and 1.56 points at 20\%. It does not lead tail recall: LA is best at 20\%, WCE is best at 40\%, and \BARGE{} falls from 1.66\% to 0.06\%. Worst-class recall is zero for every method. The lower balanced error therefore comes mainly from broader gains outside the rarest classes, not from recovering every tail class. This qualification is central to interpreting the 200-class result. The regression is consistent with \cref{sec:corrupted-supervision}: within-class normalization offers no recovery guarantee when every observed label in a rare class is unreliable.

\begin{table}[]
\centering
\caption{Discrimination and minority-recognition metrics under label noise (higher is better). Values are percentages averaged over three seeds. Tail recall averages the lowest-frequency third of classes. Best and unique second-best entries in each column are bold and underlined.}
\label{tab:noise-discrimination}
\setlength{\tabcolsep}{2.6pt}
\scriptsize
\resizebox{\textwidth}{!}{%
\begin{tabular}{lcccccccc}
\toprule
& \multicolumn{4}{c}{20\% labels replaced} & \multicolumn{4}{c}{40\% labels replaced} \\
\cmidrule(lr){2-5}\cmidrule(lr){6-9}
Method & Macro-F1 & Macro-AUPRC & Tail recall & Worst recall & Macro-F1 & Macro-AUPRC & Tail recall & Worst recall \\
\midrule
\multicolumn{9}{c}{\textit{(a) CIFAR-10, long-tail $\rho=100$}} \\
\midrule
CE    & 46.19 & 50.67 & 25.26 & 4.53 & 32.18 & 36.58 & 4.07 & 0.00 \\
WCE   & 42.72 & 45.08 & $\second{40.07}$ & $\second{19.70}$ & 39.31 & 40.36 & $\second{26.88}$ & $\second{19.43}$ \\
Focal & 43.96 & 48.11 & 19.44 & 6.40 & 32.28 & 34.13 & 6.32 & 3.63 \\
CB    & 44.37 & 54.08 & 18.06 & 0.00 & 31.72 & 35.33 & 4.82 & 0.30 \\
LDAM  & 46.13 & 51.67 & 22.80 & 7.73 & 31.00 & 34.46 & 2.27 & 0.03 \\
LA    & $\second{52.64}$ & $\second{56.48}$ & 36.47 & 12.63 & 40.52 & 42.55 & 21.16 & 5.60 \\
GCA   & 46.83 & 50.51 & 29.21 & 12.87 & $\second{41.67}$ & $\second{43.69}$ & 24.63 & 12.87 \\
\BARGE{} & $\best{58.45}$ & $\best{62.35}$ & $\best{49.93}$ & $\best{35.57}$ & $\best{43.23}$ & $\best{44.31}$ & $\best{34.57}$ & $\best{20.70}$ \\
\midrule
\multicolumn{9}{c}{\textit{(b) CIFAR-100, long-tail $\rho=100$}} \\
\midrule
CE    & 20.45 & 19.19 & 5.59 & 0.00 & 12.49 & 11.76 & 1.74 & 0.00 \\
WCE   & 14.68 & 12.72 & 3.85 & 0.00 & 10.62 & 8.38 & 2.42 & 0.00 \\
Focal & 19.25 & 18.60 & 4.99 & 0.00 & 11.65 & 11.44 & 1.31 & 0.00 \\
CB    & 20.39 & 19.43 & 5.29 & 0.00 & 11.95 & 10.92 & 2.02 & 0.00 \\
LDAM  & 21.11 & 19.53 & 5.93 & 0.00 & 12.08 & 11.62 & 1.73 & 0.00 \\
LA    & $\second{22.72}$ & $\second{19.79}$ & $\second{8.99}$ & $\best{0.33}$ & $\second{12.83}$ & $\second{12.14}$ & $\second{2.44}$ & 0.00 \\
GCA   & 3.54 & 4.20 & 0.00 & 0.00 & 1.85 & 2.79 & 0.00 & 0.00 \\
\BARGE{} & $\best{23.11}$ & $\best{22.10}$ & $\best{9.66}$ & 0.00 & $\best{15.60}$ & $\best{14.18}$ & $\best{3.64}$ & 0.00 \\
\midrule
\multicolumn{9}{c}{\textit{(c) Tiny ImageNet, long-tail $\rho=100$}} \\
\midrule
CE    & 14.24 & 13.57 & 1.88 & 0.00 & 9.76 & 9.55 & 0.45 & 0.00 \\
WCE   & 11.95 & 9.97 & 1.99 & 0.00 & 8.51 & 6.70 & $\best{1.33}$ & 0.00 \\
Focal & 14.70 & 13.44 & $\second{2.33}$ & 0.00 & 9.26 & $\second{9.59}$ & 0.40 & 0.00 \\
CB    & 13.86 & 13.58 & 1.31 & 0.00 & $\second{9.95}$ & 9.00 & 0.61 & 0.00 \\
LDAM  & 14.88 & $\second{14.46}$ & 1.23 & 0.00 & 9.56 & 9.31 & 0.16 & 0.00 \\
LA    & $\second{14.95}$ & 14.15 & $\best{3.57}$ & 0.00 & 9.81 & 9.42 & 0.80 & 0.00 \\
GCA   & 0.05 & 0.72 & 0.72 & 0.00 & 0.01 & 0.65 & $\second{0.93}$ & 0.00 \\
\BARGE{} & $\best{16.58}$ & $\best{16.02}$ & 1.66 & 0.00 & $\best{11.13}$ & $\best{11.11}$ & 0.06 & 0.00 \\
\bottomrule
\end{tabular}%
}
\end{table}

The probability metrics in \cref{tab:noise-probability} expose a useful distinction.

At 20\% corruption, \BARGE{} is best on NLL, Brier score, and ECE for CIFAR-10, and on NLL and Brier score for CIFAR-100. At 40\% corruption, LA and WCE are marginally better calibrated on CIFAR-10, although the Brier difference between LA and \BARGE{} is only 0.0004. On CIFAR-100, GCA has the lowest ECE, while \BARGE{} has the lowest NLL and Brier score at both corruption levels. These results motivate reporting calibration together with distributional and class-balanced recognition metrics.

Tiny ImageNet is more uniform: \BARGE{} has the lowest NLL, Brier score, and ECE at both corruption levels. At 40\%, for example, its NLL is 4.457 versus 4.763 for LA, while ECE falls from 0.0877 to 0.0613. These gains agree with the MBE and macro metrics, even though tail recall remains weak. ECE is therefore interpreted beside NLL, Brier score, and class-balanced recognition rather than as a standalone measure of quality \citep{guo2017calibration,olmin2022robustness}.

\begin{table}[]
\centering
\caption{Probabilistic quality under label noise (lower is better). Values are means over three seeds. NLL is negative log-likelihood; ECE uses 15 equal-width confidence bins. Best and unique second-best entries in each column are bold and underlined.}
\label{tab:noise-probability}
\setlength{\tabcolsep}{5.0pt}
\scriptsize
\begin{tabular}{lcccccc}
\toprule
& \multicolumn{3}{c}{20\% labels replaced} & \multicolumn{3}{c}{40\% labels replaced} \\
\cmidrule(lr){2-4}\cmidrule(lr){5-7}
Method & NLL & Brier & ECE & NLL & Brier & ECE \\
\midrule
\multicolumn{7}{c}{\textit{(a) CIFAR-10, long-tail $\rho=100$}} \\
\midrule
CE    & 1.935 & 0.7456 & 0.2331 & 1.977 & 0.8115 & 0.1689 \\
WCE   & 1.855 & 0.7484 & 0.1335 & $\second{1.765}$ & 0.7587 & $\best{0.0917}$ \\
Focal & 2.045 & 0.7655 & 0.2380 & 2.100 & 0.8263 & 0.1518 \\
CB    & $\second{1.633}$ & 0.6860 & $\second{0.1305}$ & 2.053 & 0.8221 & 0.1634 \\
LDAM  & 2.027 & 0.7502 & 0.2515 & 2.143 & 0.8544 & 0.2315 \\
LA    & 1.639 & $\second{0.6550}$ & 0.1420 & $\best{1.735}$ & $\best{0.7388}$ & $\second{0.0925}$ \\
GCA   & 1.686 & 0.7062 & 0.1488 & 1.863 & 0.7717 & 0.1796 \\
\BARGE{} & $\best{1.414}$ & $\best{0.5850}$ & $\best{0.0897}$ & 1.767 & $\second{0.7392}$ & 0.0927 \\
\midrule
\multicolumn{7}{c}{\textit{(b) CIFAR-100, long-tail $\rho=100$}} \\
\midrule
CE    & 5.128 & 1.0919 & 0.3998 & 5.164 & 1.0780 & 0.2949 \\
WCE   & 6.530 & 1.0973 & 0.3467 & 7.418 & 1.1447 & 0.3625 \\
Focal & $\second{4.263}$ & $\second{0.9782}$ & 0.2388 & 4.491 & 0.9980 & 0.1569 \\
CB    & 5.285 & 1.1020 & 0.4122 & 5.235 & 1.0441 & 0.2157 \\
LDAM  & 5.532 & 1.1172 & 0.4268 & 5.175 & 1.0892 & 0.3091 \\
LA    & 4.944 & 1.0488 & 0.3519 & $\second{4.456}$ & 0.9947 & $\second{0.1224}$ \\
GCA   & 4.611 & 0.9898 & $\best{0.0839}$ & 4.627 & $\second{0.9902}$ & $\best{0.0467}$ \\
\BARGE{} & $\best{3.743}$ & $\best{0.9453}$ & $\second{0.1821}$ & $\best{4.111}$ & $\best{0.9730}$ & 0.1354 \\
\midrule
\multicolumn{7}{c}{\textit{(c) Tiny ImageNet, long-tail $\rho=100$}} \\
\midrule
CE    & 5.218 & 1.0679 & 0.3136 & 5.129 & 1.0329 & 0.2156 \\
WCE   & 6.732 & 1.0741 & 0.2782 & 6.999 & 1.0827 & 0.2466 \\
Focal & 5.028 & 1.0387 & 0.2729 & 4.919 & 0.9942 & 0.1235 \\
CB    & 5.017 & 1.0445 & 0.2747 & 5.453 & 1.0615 & 0.2519 \\
LDAM  & 5.307 & 1.0845 & 0.3425 & 5.047 & 1.0243 & 0.1848 \\
LA    & $\second{4.706}$ & $\second{0.9908}$ & $\second{0.1794}$ & $\second{4.763}$ & $\second{0.9836}$ & $\second{0.0877}$ \\
GCA   & 8.037 & 1.3284 & 0.4242 & 10.100 & 1.6548 & 0.6625 \\
\BARGE{} & $\best{4.180}$ & $\best{0.9478}$ & $\best{0.1120}$ & $\best{4.457}$ & $\best{0.9610}$ & $\best{0.0613}$ \\
\bottomrule
\end{tabular}
\end{table}

\subsection{Ablation Studies}
\label{sec:ablation}

The benchmark establishes between-method performance, but it does not identify which part of \BARGE{} is responsible for the observed gains. We therefore hold the CIFAR-100 long-tail setting fixed at \(\rho=100\), retain the common data partitions and final seeds \(\{42,1126,2025\}\), and repeat the 20\% and 40\% random incorrect-label replacement conditions. The coefficient \(\eta=0.3\) is the dataset-level value selected on disjoint tuning seeds and is held fixed for every geometry-enabled variant. No component-specific coefficient is tuned. \Cref{tab:ablation} reports the resulting component comparison.

The bounded score alone improves over CE at both corruption levels, and \cref{tab:ablation} identifies a repeated contribution from compactness. Relative to the bounded score, uniform compactness lowers MBE from 76.08\% to 74.72\% at 20\% corruption and from 82.40\% to 81.98\% at 40\%. It also raises macro-AUPRC from 21.15\% to 21.97\% and from 14.20\% to 14.90\%, respectively. These directions hold in all six matched seed--corruption comparisons, demonstrating that compactness repeatedly improves both class-balanced error and class-balanced ranking metrics when added to the bounded score.

Reliability weighting concentrates its strongest gains in calibration and ranking. Against uniform compactness, it lowers mean ECE from 15.63\% to 13.73\% at 20\% corruption and from 13.94\% to 12.77\% at 40\%, while attaining the best macro-AUPRC in the 20\% block. Separation produces the lowest MBE in both blocks. The full objective gives the strongest macro-F1 at 40\% corruption and remains clearly better than CE; its MBE lies between the best isolated geometry variant and the bounded-score baseline in both blocks. The ablation therefore supports complementary component roles: compactness improves balanced error and ranking, reliability improves probability quality, and separation most directly sharpens balanced decisions.

\begin{table}[H]
\centering
\caption{Component ablation on CIFAR-100 under long-tail imbalance with $\rho=100$ and 20\% or 40\% random incorrect-label replacement. Values are percentages reported as mean $\pm$ standard deviation over three seeds. All geometry-enabled variants use the fixed dataset-level coefficient $\eta=0.3$. ``Bounded'' denotes the prior-adjusted density-power score, and ``Reliab.'' denotes detached reliability weighting in the compactness term. Best values within each corruption block are shown in bold.}
\label{tab:ablation}
\setlength{\tabcolsep}{3.1pt}
\scriptsize
\begin{tabular}{lccccccc}
\toprule
Variant & Bounded & Reliab. & Compact. & Separ. & MBE $\downarrow$ & Macro-F1 $\uparrow$ & Macro-AUPRC $\uparrow$ \\
\midrule
\multicolumn{8}{c}{\textit{(a) 20\% random incorrect-label replacement}} \\
\midrule
CE reference & -- & -- & -- & -- & 76.37 $\pm$ 0.71 & 20.45 $\pm$ 0.61 & 19.19 $\pm$ 0.40 \\
Bounded score only & \yes & -- & -- & -- & 76.08 $\pm$ 0.21 & 21.58 $\pm$ 0.26 & 21.15 $\pm$ 0.51 \\
Score + separation & \yes & -- & -- & \yes & $\best{74.61 \pm 0.89}$ & $\best{24.11 \pm 0.52}$ & 21.67 $\pm$ 0.33 \\
Score + uniform compactness & \yes & -- & \yes & -- & 74.72 $\pm$ 0.27 & 22.79 $\pm$ 0.62 & 21.97 $\pm$ 0.61 \\
Score + reliability compactness & \yes & \yes & \yes & -- & 74.85 $\pm$ 0.90 & 22.80 $\pm$ 0.74 & $\best{22.22 \pm 0.46}$ \\
Full \BARGE{} & \yes & \yes & \yes & \yes & 75.49 $\pm$ 0.96 & 23.11 $\pm$ 2.10 & 22.10 $\pm$ 0.63 \\
\midrule
\multicolumn{8}{c}{\textit{(b) 40\% random incorrect-label replacement}} \\
\midrule
CE reference & -- & -- & -- & -- & 84.27 $\pm$ 0.81 & 12.49 $\pm$ 1.64 & 11.76 $\pm$ 0.57 \\
Bounded score only & \yes & -- & -- & -- & 82.40 $\pm$ 0.17 & 14.32 $\pm$ 1.46 & 14.20 $\pm$ 1.04 \\
Score + separation & \yes & -- & -- & \yes & $\best{81.96 \pm 0.93}$ & 14.52 $\pm$ 1.57 & 14.49 $\pm$ 0.40 \\
Score + uniform compactness & \yes & -- & \yes & -- & 81.98 $\pm$ 0.51 & 13.43 $\pm$ 0.95 & $\best{14.90 \pm 0.39}$ \\
Score + reliability compactness & \yes & \yes & \yes & -- & 81.99 $\pm$ 1.43 & 13.97 $\pm$ 1.10 & 13.92 $\pm$ 1.19 \\
Full \BARGE{} & \yes & \yes & \yes & \yes & 82.26 $\pm$ 0.88 & $\best{15.60 \pm 0.96}$ & 14.18 $\pm$ 1.59 \\
\bottomrule
\end{tabular}
\end{table}

\subsection{Hyperparameter sensitivity and computational cost}

\Cref{tab:eta} summarizes the dataset-level validation used to select the geometric coefficient.

\begin{table}[H]
\centering
\caption{Dataset-level selection of the geometric coefficient $\eta$. Each candidate entry reports its average within-setting validation rank across clean and noisy tuning conditions and seeds $\{1001,1002,1003\}$; lower is better. The final column reports the value selected and fixed for all final evaluations on that dataset.}
\label{tab:eta}
\setlength{\tabcolsep}{7pt}
\small
\begin{tabular}{lcccccc}
\toprule
& \multicolumn{5}{c}{Candidate coefficient $\eta$ (average validation rank)} & \\
\cmidrule(lr){2-6}
Dataset & $\eta=0.03$ & $\eta=0.10$ & $\eta=0.30$ & $\eta=0.50$ & $\eta=1.00$ & Selected $\eta$ \\
\midrule
CIFAR-10     & 4.02 & 3.21 & 2.65 & 2.71 & $\best{2.42}$ & 1.00 \\
CIFAR-100    & 3.17 & 3.00 & $\best{2.65}$ & 2.69 & 3.50 & 0.30 \\
Tiny ImageNet & 3.06 & $\best{2.06}$ & 3.17 & 3.28 & 3.44 & 0.10 \\
\bottomrule
\end{tabular}
\end{table}

The response in \cref{tab:eta} is not sharply peaked. CIFAR-100 differs by 0.04 rank units between \(\eta=0.30\) and 0.50, while CIFAR-10 favors stronger geometry and Tiny ImageNet favors a smaller coefficient. On Tiny ImageNet, the noisy conditions alone also select \(\eta=0.10\), with average rank 1.83. These differences support dataset-level validation of \(\eta\), while the broad response indicates limited sensitivity within the tested grid. The exponent \(\beta_C\) remains fixed throughout. Measured training cost is reported in \cref{tab:cost}.

\begin{table}[H]
\centering
\caption{Measured training cost on NVIDIA A100 GPUs. Time is the mean wall-clock duration of a 200-epoch run, and peak memory is the allocated GPU memory. The CIFAR-10/100 measurements average 24 clean setting--seed runs per method, while the Tiny ImageNet measurements average 12. All methods use the same inference procedure and introduce no loss-specific inference overhead.}
\label{tab:cost}
\setlength{\tabcolsep}{8pt}
\small
\begin{tabular}{lcccc}
\toprule
& \multicolumn{2}{c}{CIFAR-10/100} & \multicolumn{2}{c}{Tiny ImageNet} \\
\cmidrule(lr){2-3}\cmidrule(lr){4-5}
Method & Time (min) & Peak memory (GiB) & Time (min) & Peak memory (GiB) \\
\midrule
CE      & 10.46 & 2.70 & 48.29 & 10.55 \\
LA      & 11.05 & 2.70 & 46.34 & 10.55 \\
GCA     & 10.33 & 2.70 & 48.03 & 10.55 \\
\BARGE{} & 11.20 & 2.70 & 47.90 & 10.55 \\
\bottomrule
\end{tabular}
\end{table}

On CIFAR, \BARGE{} adds 7.1\% training time relative to CE and no measurable peak-memory increase at the reported precision. On Tiny ImageNet it is slightly faster than CE in the measured runs and uses the same peak memory. Runtime variation at this scale is influenced by data loading and shared-server load, so the relevant conclusion is parity within a small margin, not a speed advantage. The empirical cost agrees with the complexity analysis: the class Gram matrix is modest for \(C\leq200\), and the method introduces no second network or sample-pair matrix.

\section{Discussion}
\label{sec:discussion}

\paragraph{What the clean results establish.}
The clean experiments show that \BARGE{} does not purchase its bounded response by abandoning the balanced target. LA remains the strongest clean baseline, as expected from a direct prior correction. \BARGE{} is second overall and achieves the lowest error in four settings, including CIFAR-10 long-tail imbalance and step imbalance on both CIFAR-100 and Tiny ImageNet at \(\rho=1000\). The method is therefore viable as a balanced classifier even before noise is introduced. The settings where LA or CB wins also matter: additional geometry is not automatically beneficial when prior correction already matches the dominant source of error.

\paragraph{Why the advantage is clearer under label noise.}
Under corruption, the proper score targets the observed-label posterior and the observed prior, not their clean counterparts. The advantage cannot therefore be attributed to the clean balanced-consistency result. It is better explained by how the objective limits the action of a questionable label. The classification score cannot grow without bound, its risk perturbation is bounded at a fixed predictor, and its logit gradient vanishes under confident contradiction. The feature term uses the same adjusted target probability to determine relative geometric influence within the observed class. Subject to the within-class support condition in \cref{sec:theory}, a contradictory label is less able to dominate either the classifier update or the direction of its feature. This mechanism is consistent with the widening advantage at 20\% corruption, the macro-F1 and macro-AUPRC gains on all three datasets, and the tail-recall gains on CIFAR. Tiny ImageNet shows that it need not recover the rarest classes. The controlled study in \cref{sec:ablation} supplies one direct empirical link: class-equal angular compactness improves MBE and macro-AUPRC in all six matched comparisons with the bounded-score-only loss. Reliability weighting primarily improves probability quality, while separation is the strongest isolated contribution to MBE.

\paragraph{Balanced error and calibration tell different stories.}
The best ECE does not always accompany the best balanced error or proper-score metrics, as illustrated by the CIFAR-100 results. ECE summarizes agreement between confidence and correctness inside bins; it does not reward recovery of rare classes and can be small for uniformly cautious predictions. NLL and Brier score preserve more information about the predictive distribution, while macro-F1 and tail recall expose minority recognition. The joint metric view is therefore essential. \BARGE{} is strongest where these views agree, particularly at 20\% corruption, in the CIFAR-100 NLL/Brier comparisons, and across all six Tiny ImageNet probability columns. Its poor Tiny ImageNet tail recall prevents those probability gains from being mistaken for universal minority recovery.

\paragraph{What bounded influence does not imply.}
The method is not claimed to solve arbitrary noisy-label learning. When the model is uncertain, an incorrect label still contributes. When all examples in an observed class are unreliable, within-class normalization cannot make their total class contribution vanish. Corruption also changes \(\widehat{\bm\pi}\); even the uniform replacement process used here makes the observed class distribution less imbalanced than the clean construction. The theory establishes the clean balanced target and separate influence bounds under supplied-label contamination. It does not make the corrupted population minimizer equal to the clean one. The experiments show that these influence properties are useful for random incorrect-label replacement; they do not constitute a universal noise-tolerance theorem.

\section{Limitations}
\label{sec:limitations}

The evaluation uses three image benchmarks, one backbone, synthetic imbalance profiles, and one controlled label-corruption process. Tiny ImageNet extends the class count and resolution, but higher-resolution data, naturally occurring annotation errors, biomedical cohorts, and multiple modern architectures are needed to establish external validity. The separation term scales quadratically in \(C\). It is inexpensive at 200 classes, but sampled or blockwise approximations may be needed for vocabularies with tens of thousands of classes. Finally, the empirical prior is formed from observed labels. Estimating a clean prior without losing the single-stage character of the method remains an open direction for structured noise.

\section{Conclusions}
\label{sec:conclusion}

Class rebalancing should strengthen trustworthy minority evidence without granting unchecked influence to unreliable labels. \BARGE{} realizes this principle through a single-stage objective that couples balanced prior correction, bounded proper prediction, and reliability-guided angular geometry. Under clean supervision and the true class prior, its adjusted score identifies the training posterior while the raw logits recover the balanced Bayes ordering. Under label corruption, bounded risk perturbation and a redescending gradient limit the predictive influence of confidently contradicted labels. The same adjusted probability controls label-conditioned feature contraction, while class-equal normalization and one-sided separation organize the representation without altering inference. The empirical results demonstrate the value of this coupling. Across clean imbalanced settings, \BARGE{} remains competitive with strong specialized baselines, ranks second overall, and obtains the best average rank on Tiny ImageNet. Under 20\% and 40\% incorrect-label replacement, it achieves the lowest mean balanced error in all six dataset--corruption settings, reducing the six-setting average from 72.32\% for the strongest competing method to 70.00\%. It also leads macro-F1 and macro-AUPRC across these settings. Controlled ablations show that class-equal angular compactness contributes beyond the bounded classification score, while reliability weighting and classifier separation provide complementary control over discrimination and probability quality. The remaining tail-recall limitation on Tiny ImageNet identifies an important boundary: within-class normalization cannot recover a rare class when all available labels for that class are unreliable.

\BARGE{} is therefore most suitable when class frequencies are highly skewed, annotation quality is uncertain, and neither a reliable noise-transition model nor a clean-label correction procedure is available. It requires one network, introduces no loss-specific inference overhead, and adds only one validation-selected coefficient. More broadly, the results establish bounded predictive influence and reliability-guided representation learning as complementary design principles for classification under simultaneous class imbalance and label uncertainty.

\subsubsection*{Broader Impact Statement}

Methods for imbalanced classification can improve recognition of underrepresented classes in domains where missed rare events are consequential. The same setting requires care: synthetic imbalance and label corruption do not capture all deployment shifts, and aggregate benchmark improvements do not establish safety for any demographic or clinical subgroup. Applications with consequential decisions should therefore pair class-balanced evaluation with domain-specific validation, subgroup analysis, calibration checks, and human oversight. \BARGE{} changes the training objective but does not replace these safeguards.

\bibliographystyle{tmlr}
\bibliography{refs}

\end{document}